\newtheorem{theorem}{Theorem}
\newtheorem{lemma}{Lemma}
\newcommand{\myparagraph}[1]{\textbf{#1.}}
\renewcommand{\vec}[1]{\ensuremath{\mathbf{#1}}}
\begin{document}

\title{\LARGE \bf
  A Minimalistic Approach to Segregation\\
  in Robot Swarms}

\author{
    Peter~Mitrano$^{1}$,
    Jordan~Burklund$^{1}$,
    Michael~Giancola$^{1}$,
    Carlo~Pinciroli$^{1}$%
\thanks{$^{1}$ Robotics Engineering, Worcester Polytechnic Institute, MA, USA. Email: {\sf cpinciroli@wpi.edu}}%
}

\maketitle
\thispagestyle{empty}
\pagestyle{empty}

\begin{abstract}
  We present a decentralized algorithm to achieve segregation into an arbitrary
  number of groups with swarms of autonomous robots. The distinguishing feature
  of our approach is in the minimalistic assumptions on which it is
  based. Specifically, we assume that (i) Each robot is equipped with a ternary
  sensor capable of detecting the presence of a single nearby robot, and, if
  that robot is present, whether or not it belongs to the same group as the
  sensing robot; (ii) The robots move according to a differential drive model;
  and (iii) The structure of the control system is purely reactive, and it maps
  directly the sensor readings to the wheel speeds with a simple `if' statement.
  We present a thorough analysis of the parameter space
  that enables this behavior to emerge, along with conditions for guaranteed
  convergence and a study of non-ideal aspects in the robot design.
\end{abstract}


\section{Introduction}

Group formation is one of the most fundamental mechanisms a robot swarm must
exhibit~\cite{Brambilla2013}. Group formation can occur in several forms to
satisfy different requirements. Segregation is a particular type of group
formation in which the focus is on creating local aggregates of robots that
share a common property. Segregation can be seen as a precursor to object
sorting, task allocation, or self-assembly. For example, swarms may need to
split into arbitrary groups to diffuse and search different areas, or segregate
by skill or capability in order to form useful heterogeneous teams.

Segregation is an example of the broader class of spatially organizing
behaviors, whose purpose is to impose a structure in the environment (e.g.,
object clustering~\cite{gauci_clustering_2014}, collective
construction~\cite{Bolger2010}) or in the distribution of the robots (e.g.,
aggregation~\cite{shlyakhov_survey_2017}, pattern
formation~\cite{Pinciroli:DARS2016}, self-assembly~\cite{gross2008self}).

A recent line of research in spatially organizing behaviors focuses on the
\emph{minimal} assumptions a swarm of robots must fulfill in order to perform the
task. Johnson and Brown~\cite{johnson_evolving_2016} and Brown \emph{et
  al.}~\cite{brown_discovery_2018} characterized the set of possible behaviors
that can be obtained using primordial control strategies based on a simple
`if/then/else' structure, binary sensors, and differential-drive robots. Gauci
\emph{et al.} provided the specific conditions for the emergence of
aggregation~\cite{gauci_evolving_2014} and object
clustering~\cite{gauci_clustering_2014}, while St.-Onge \emph{et
  al.}~\cite{StOnge:IROS2018} studied the emergence of circular
formations. While more efficient control strategies have been proposed to
achieve these behaviors, studying the minimal assumptions required for their emergence is
an important step towards principled `swarm engineering' practices. In addition,
these minimal behaviors might offer last-resort solutions in case of sensor
failures in remote environments such as in planetary exploration missions.

This paper furthers this line of inquiry by studying the minimal assumptions for
$N$-class segregation to emerge from local, decentralized interactions among
robots. The term `$N$-class' refers to the creation of $N$ spatially distinct
groups. We show that, for segregation to emerge, it is sufficient to equip an
`if/then/else', differential drive robot with a \emph{ternary} sensor. This
sensor detects the presence of a robot in range. When a robot is detected, the
sensor can distinguish whether it is a \emph{kin}, i.e., it belongs to the same
group as the sensing robot, or a \emph{non-kin}, i.e., it belongs to a different
group. When multiple robots are in range, the sensor returns information on the
closest one of them.

The main contributions of this paper are \emph{(i)} A study of the parameter
space that enables the emergence of $N$-class segregation; \emph{(ii)} A study
of why convergence is guaranteed for the best parameter choice found; and
\emph{(iii)} An analysis of the robustness of the algorithm to non-idealities in
the robot design.

\section{Related Work}
Segregation is a common behavior in nature, and it can be observed across
scales. For example, cell segregation is a basic building block of embryogeneis
in tissue generation processes~\cite{batlle_molecular_2012,Steinberg1963}; while
social insects, such as ants, organize their brood into ring-like
structures~\cite{Franks1992}.

In robotics, segregation is a problem that has not received considerable
attention. The main methods that have been proposed so far are based on some
variation of the artificial potential approach~\cite{Spears2004}, which assumes
that the robots can detect each other and estimate relative distance vectors.

Gro\ss~\emph{et al.}~\cite{gross_segregation_2009} proposed an
algorithm inspired by the Brazil Nut effect, in which the robots form regular
layers simulating gravity by sharing a common direction. This study was later
extended to work on e-pucks robots~\cite{Chen2012}. To simulate gravity, this
approach requires the robots to share a common target vector, which can be
obtained through centralized controllers or a distributed consensus algorithm.

Kumar \emph{et al.}~\cite{kumar_segregation_2010} introduced the concept of
``differential potential'', whereby two robots experience a different artificial
potential depending on their being part of the same class or not. The
convergence of this approach is guaranteed for two classes, but when more
classes are employed local minima prevent segregation from emerging.

Santos \emph{et al.}~\cite{santos_segregation_2014} took inspiration
from~\cite{kumar_segregation_2010} to devise an approach based on the
Differential Adhesion Hypothesis, which states that kin cells tend to adhere
stronger than non-kin cells. Hower, one limitation is the assumption
that the robots have global knowledge about the positions of other robots.

To the best of our knowledge, this paper is the first to propose a segregation
algorithm that is not based on global information nor on communication and
sensing of multiple neighbors.

\section{Methodology}

\subsection{Problem Formulation}

\newcommand{\vL}{\ensuremath{v_{\text{left}}}}
\newcommand{\vR}{\ensuremath{v_{\text{right}}}}
\newcommand{\vaL}{\ensuremath{V_{\text{left}}}}
\newcommand{\vaR}{\ensuremath{V_{\text{right}}}}
\newcommand{\VM}{\ensuremath{V_{\text{max}}}}
\myparagraph{Motion model}
We consider a set of robots executing the same controller in a two-dimensional,
obstacle-free environment. The robots are equipped with two wheels for which
$[\vL,\vR]$ denote their \emph{normalized} linear speeds. By `normalized' we
mean that the speed values are in the range $[-1, 1]$. Using normalized speeds
allows us to reason in a general way over the specific speeds attainable by any
robot. To transform from normalized speeds $[\vL,\vR]$ into actual speeds
$[\vaL,\vaR]$, we introduce a parameter $\VM$ that denotes the maximum linear
speed possible with a specific robot and define
\begin{align}
  \vaL &= \VM \vL\\
  \vaR &= \VM \vR.
\end{align}
The robots' motion is modeled by the well-known differential-drive
equations~\cite{Dudek2010}
\begin{equation}
  \label{eq:diffdrive}
  \begin{aligned}
    x(t)      &=  \frac{l}{2}\frac{\vaR+\vaL}{\vaR-\vaL}\sin\left(\frac{\vaR-\vaL}{l}t\right)\\
    y(t)      &= -\frac{l}{2}\frac{\vaR+\vaL}{\vaR-\vaL}\cos\left(\frac{\vaR-\vaL}{l}t\right)\\
    \theta(t) &=  \frac{\vaR-\vaL}{l}t
  \end{aligned}
\end{equation}
where $t$ is time, $[\,x\;y\;\theta\,]^T$ is the pose of the robot, and $l$ is
the distance between the wheels.

\newcommand{\vPN}[2]{\ensuremath{v_{\text{#1}}}^{S=#2}}
\newcommand{\robot}[2]{%
  \filldraw[draw=#2,fill=#2!20] (#1) circle(5mm);
  \draw[draw=#2,->,-Stealth,rotate around={0:(#1)}] (#1) -- +(5mm,0);
  \fill[fill=gray!20] ($(#1)+(5mm,0)$) -- +( 45:1cm) -- +(-45:1cm) -- cycle;%
  \fill[fill=#2] ($(#1)+(5mm,0)$) circle (1mm);%
}
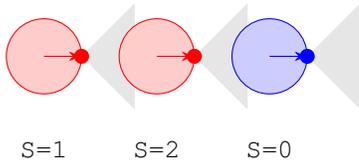
\begin{figure}[t]
  \centering
  \begin{tikzpicture}
    \coordinate (s0) at (0  cm,0cm);
    \coordinate (s1) at (1.5cm,0cm);
    \coordinate (s2) at (3  cm,0cm);
    \robot{s0}{red};
    \robot{s1}{red};
    \robot{s2}{blue};
    \node[below=of s0]{\texttt{S=1}};
    \node[below=of s1]{\texttt{S=2}};
    \node[below=of s2]{\texttt{S=0}};
  \end{tikzpicture}
  \caption{A diagram of the ternary sensor in which classes are depicted
    as colors. The left red robot detects a kin robot, so its sensor
    returns 1. The middle red robot detects a blue robot, so its sensor
    returns 2. The right robot detects no robot, so its sensor returns
    0.}
  \label{fig:sensor}
\end{figure}
\begin{algorithm}[t]
  \begin{algorithmic}
    \If {$S=0$} \State set wheel speeds to $\vPN{left}{0}$, $\vPN{right}{0}$
    \ElsIf {$S=1$} \State set wheel speeds to $\vPN{left}{1}$, $\vPN{right}{1}$
    \Else \State set wheel speeds to $\vPN{left}{2}$, $\vPN{right}{2}$
    \EndIf
  \end{algorithmic}
  \caption{The segregation control algorithm.}
  \label{alg:controller}
\end{algorithm}

\myparagraph{Sensor model}
The robots are also equipped with a ternary sensor that is able to detect the
presence of nearby robots and their ``kinness''. Two robots are \emph{kin} if
they belong to the same class (denoted by color in our experiments); they are
\emph{non-kin} otherwise. The sensor is assumed to have infinite range (we
consider non-infinite range in Sec.~\ref{section:beam_range}). As depicted in
Fig.~\ref{fig:sensor}, the sensor returns a reading $S=0$ when no robot is
detected, $S=1$ when a kin robot is detected, and $S=2$ when a non-kin robot is
detected. We allow for any number of classes, but the sensor need not
distinguish between different non-kin classes---it only detects whether a nearby
robot belongs to the same class or not.

\myparagraph{Control logic}
The control logic followed by the robots is formalized in
Alg.~\ref{alg:controller}. It is a simple `if/then/else' structure, which maps
the sensor readings $S$ directly into normalized wheel speeds $[\,\vL\;\vR\,]$. The
latter are the parameters whose value we intend to study, and they can be
encoded as a six-dimensional vector
$$
[\,
\vPN{left}{0}\;
\vPN{right}{0}\;
\vPN{left}{1}\;
\vPN{right}{1}\;
\vPN{left}{2}\;
\vPN{right}{2}\;
\,].
$$

\myparagraph{Objective}
The objective of our study is to find the values of the speed parameters for
which the robots group into clusters, such that all the robots of the same class
are packed into one cluster with no non-kin robots.

\subsection{Simulation Environment and Robots}

\myparagraph{Simulation platform} We utilized the ARGoS multi-robot
simulator~\cite{pinciroli_argos:_2012} to search for controller parameters and
evaluate them. ARGoS offers accurate models for several differential-drive
robots, such as the foot-bot~\cite{Bonani2010}, the Khepera
IV\footnote{https://www.k-team.com/khepera-iv}, and the
Kilobot~\cite{Rubenstein2012}.

\myparagraph{Robot platform}
For the simulated experiments we opted to use the foot-bot, because of the
possibility to utilize its range-and-bearing communication system as a base for
the ternary sensor. The advantage of using the range-and-bearing system is that
its model is simple and, as a consequence, a large number of simulations could
be completed in a short time. The range-and-bearing system allows two robots to
exchange messages when they are in direct line-of-sight; upon receiving a
message, a robot can also estimate the relative position of the sender. This
sensor, in principle, receives messages from all the nearby robots. To simulate
the ternary sensor, our robot controller kept the message of the closest
robot. The message payload was an integer that encoded the id of the group to
which the sender belonged. The range-and-bearing sensor is simulated through ray
casting. This allowed us to assume that the sensor reading is infinitely thin, a
choice that simplifies the mathematical analysis presented in
Sec.~\ref{sec:analysis}. However, in practical applications, the sensor can be
expected to cast a cone-shaped sensory range with non-zero aperture angle. We
explore the performance effect of various angles in
Sec.~\ref{sec:aperture_angle}.

\subsection{Grid Search}
\label{sec:gridsearch}

\myparagraph{Trial setup}
In order to exhaustively search the space of possible controllers, we conducted
a grid search of the 6-dimensional parameter space. Due to limited computational
resources, we were only able to search with a resolution of $7$ values per
parameter, which means in total we evaluated $7^6=117,649$ parameter sets. For
each parameter set, we tested 38 different initial configurations, with 100
simulated seconds for each trial. These initial configurations consisted of
uniformly random placement, clusters, and lines of robots distributed throughout
the environment. We chose to include some structured configurations (clusters
and lines) because we discovered that they affected significantly the
performance with respect to uniform random configurations. Hence, by explicitly
evaluating diverse initial configurations, we could better estimate the best
parameter values in the general case. Examples of these starting configurations
can be seen in our supplementary videos:
\href{https://www.youtube.com/playlist?list=PL9HqYJ1IkIKVX9EsT5BY9LnBsBPTjc5bB}{https://goo.gl/z8UAuB}.

\myparagraph{Clusters}
To define our cost function, we first establish the notion of `cluster',
intuitively defined as an island of connected kin robots.  Denoting with $r$ the
radius of the body of a robot, and defining
$\vec{p}_i(t) = [\,x_i(t)\;y_i(t)\,]$ as the $x$ and $y$ coordinates of the
robot at time $t$, we consider any two kin robots to be connected if
$$
\lVert\vec{p}_i(t) - \vec{p}_j(t)\rVert \le 2r + \epsilon \qquad (i \ne j, \epsilon \in \mathbb{R}^+).
$$
In our experiments, we set $\epsilon = \SI{5}{\centi\meter}$.  We find clusters
by first constructing an adjacency matrix and then performing a depth-first
search.  Since we are interested in segregating the robots in $N$ classes, the
final result of a trial is expected to be a set of $N$ distinct clusters
composed of kin robots.

\myparagraph{Cost function}
To measure the difference between the ideal, perfectly segregated result and any
configuration achieved by the robots over time, we first calculate, for every
class $i$, the number of robots in the largest cluster $c_i(t)$ formed by robots of class $i$.  Since,
in principle, different classes might involve different numbers of robots, in
our cost function we employ the ratio
$$
\gamma_i(t) = -\frac{c_i(t)}{C_i}
$$
This ratio should be maximized, so the negative sign assigns larger clusters a lower, more negative, cost.
Here $C_i$ is the number of robots that belongs to class $i$. At each time step, the cost is
$$
\gamma(t) = \frac{1}{N}\sum_{i=1}^N\gamma_i(t)
$$
Our complete cost function is then
\begin{equation}
  \label{eq:cost_function}
  c_{\text{total}} =  \sum_{t=0}^{T-1} t\gamma(t)
\end{equation}
in which we denote the total trial time (100 seconds) with $T$. The effect of multiplying $\gamma_i(t)$ by $t$ is to highlight the
emergence of clusters as the trial time proceeds: we cannot expect large
clusters to be present at the beginning of a trial, but good parameter settings
should grow (and maintain) clusters over time. In our experiments, we found that
$c_{\text{total}}$ correctly assigns cost in most scenarios, thus fitting
well our analysis purposes. However, this cost function considers a straight
line of robots to be a cluster, and as such it might not be ideal for scenarios
in which the clusters are required to be tight.

\begin{figure}[t]
  \centering
  \includegraphics[width=0.32\linewidth]{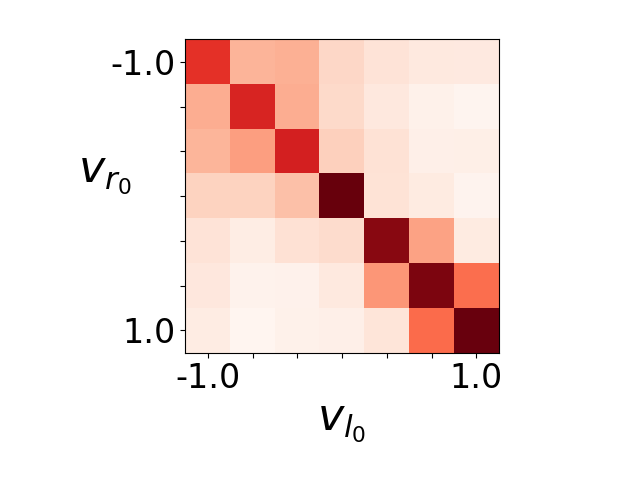}
  \includegraphics[width=0.32\linewidth]{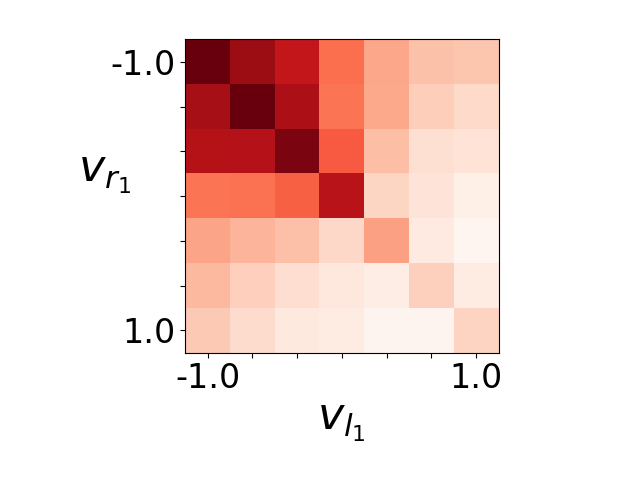}
  \includegraphics[width=0.32\linewidth]{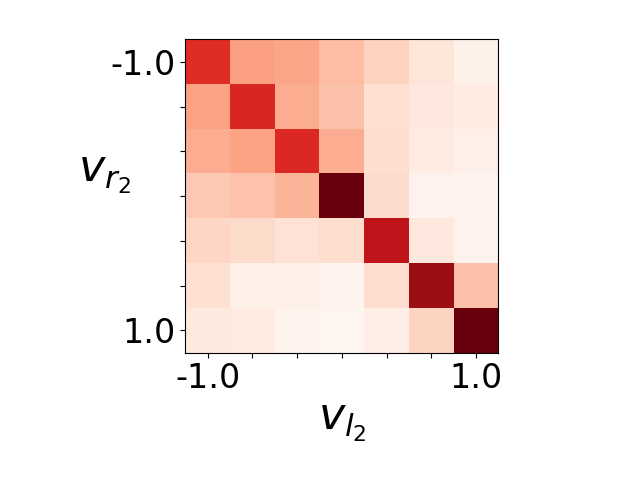}
  \includegraphics[width=0.32\linewidth]{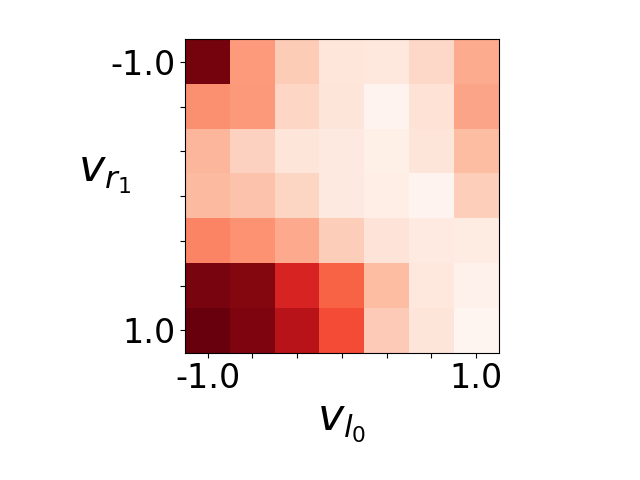}
  \includegraphics[width=0.32\linewidth]{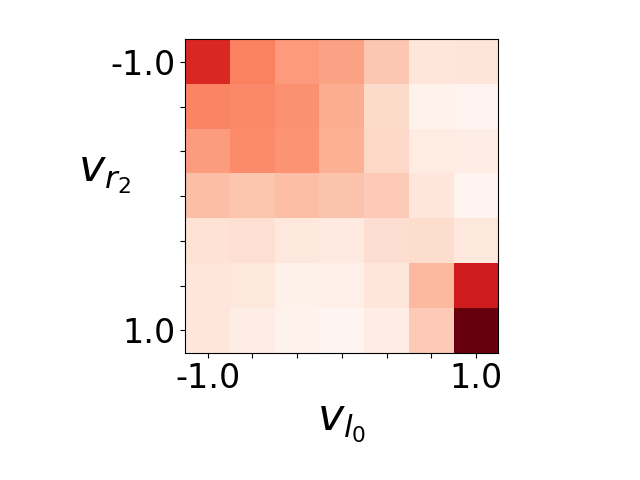}
  \includegraphics[width=0.32\linewidth]{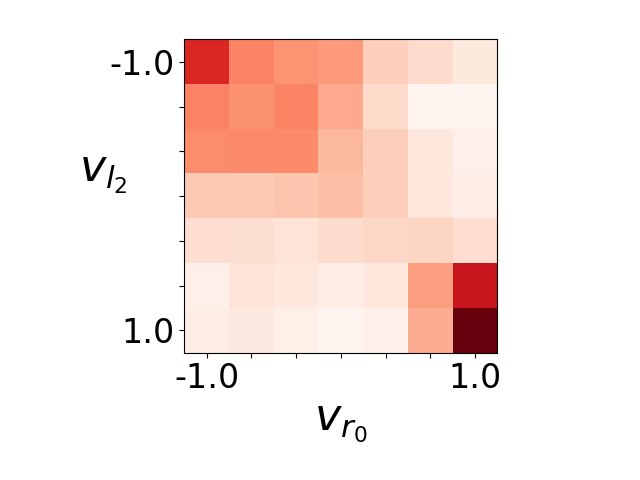}
  \includegraphics[width=0.32\linewidth]{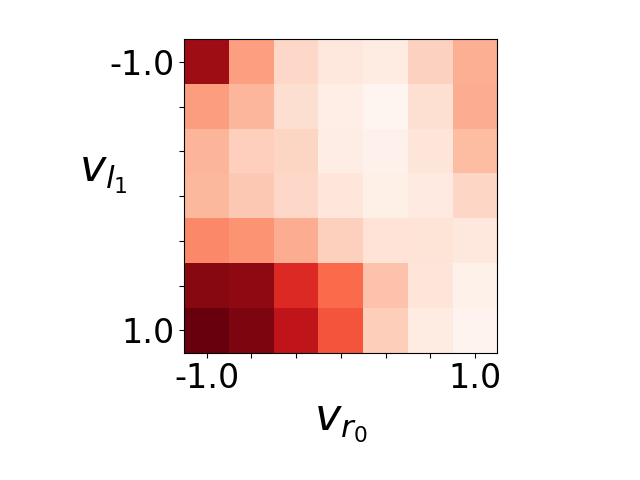}
  \includegraphics[width=0.32\linewidth]{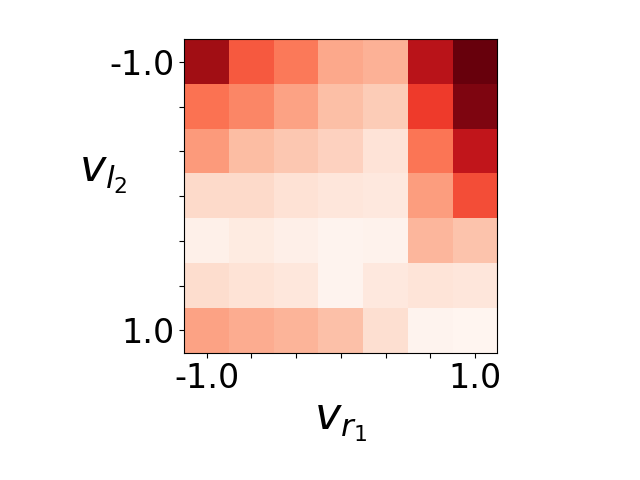}
  \includegraphics[width=0.32\linewidth]{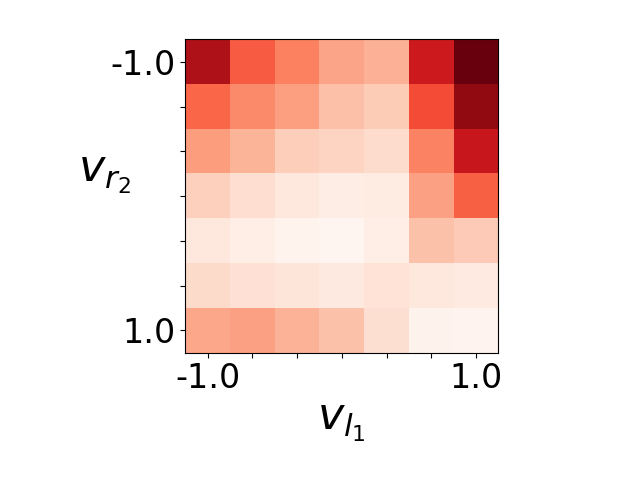}
  \includegraphics[width=0.32\linewidth]{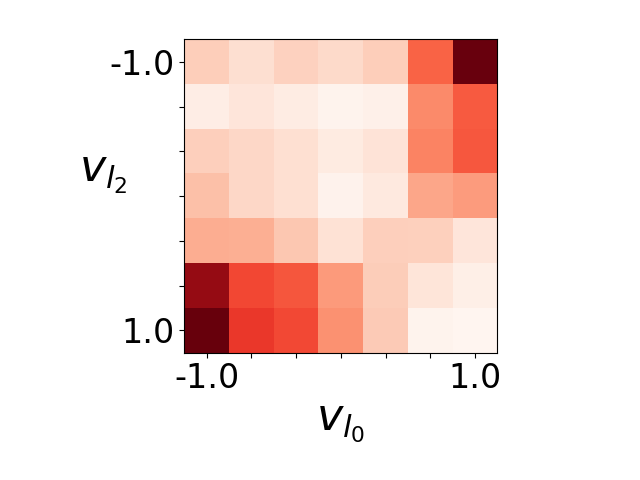}
  \includegraphics[width=0.32\linewidth]{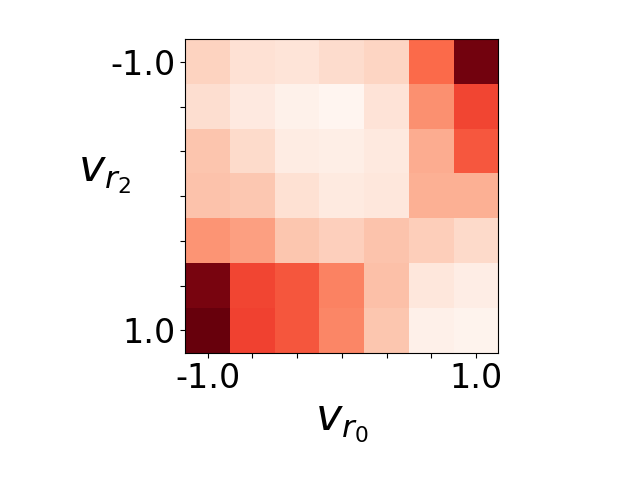}
  \includegraphics[width=0.32\linewidth]{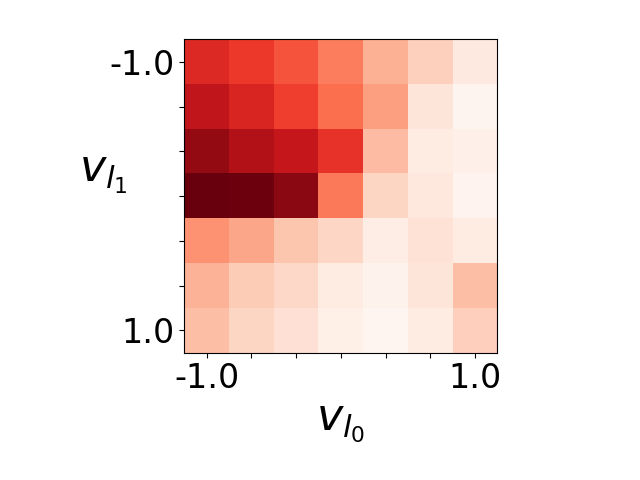}
  \includegraphics[width=0.32\linewidth]{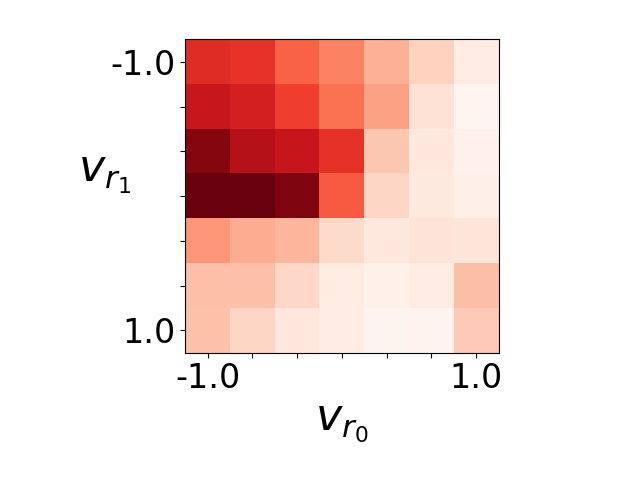}
  \includegraphics[width=0.32\linewidth]{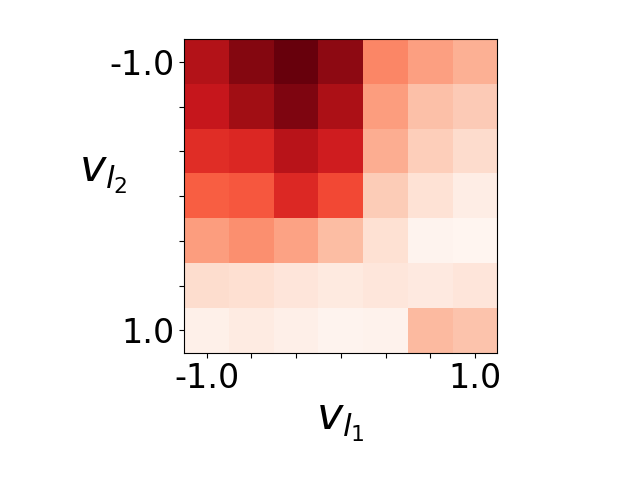}
  \includegraphics[width=0.32\linewidth]{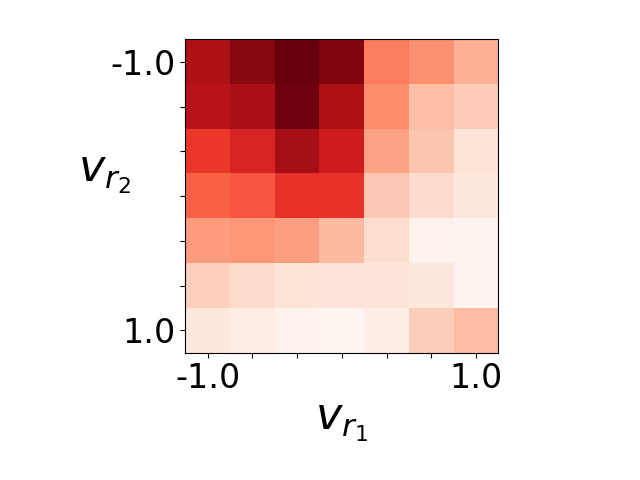}
  \caption{Heatmaps that relate relevant pairs of wheel speeds.}
  \label{fig:gridsearch}
\end{figure}

\section{The Emergent Behavior}
\myparagraph{Visualizing grid search}
The results of grid search are reported in Fig.~\ref{fig:gridsearch}. Because
the search space is six-dimensional, we chose to visualize it by plotting every
pair of parameters against each other. For example, we consider how the cost
changes as $\vPN{left}{0}$ and $\vPN{right}{0}$ change. As an example of reading
these plots, we can tell from the plot of $\vPN{left}{1}$ and $\vPN{right}{1}$
that there were no good controllers where the left and right wheel speeds were
equal and negative (dark squares in the upper left), and that the best
controllers had slightly unequal values close to 1 (lightest squares in the
bottom left). These plots also convey the presence of sharp discontinuities
where performance changes dramatically.

\myparagraph{The emergent behavior}
After running the grid search, the parameters with the lowest mean cost across all 38
configurations was
\begin{equation}
\label{eq:controller}
[1, -\sfrac{2}{3}, \sfrac{1}{3}, 1, 1, 0].
\tag{C}
\end{equation}

\begin{figure}[t]
  \centering
  \includegraphics[width=0.5\linewidth]{./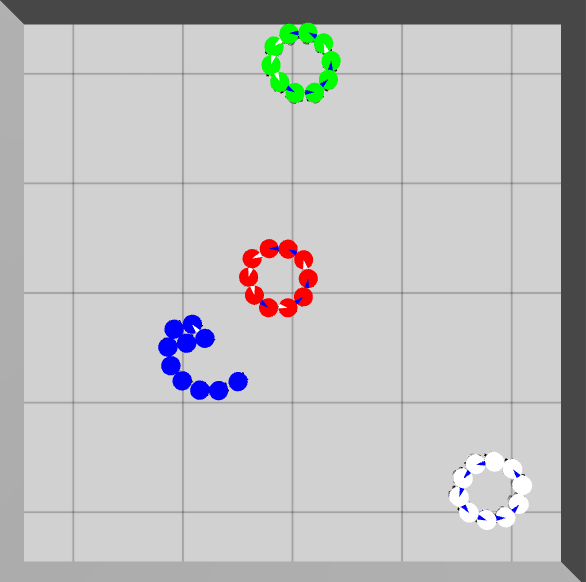}
  \caption{The segregation behavior found by grid search consists in the
    formation of homogeneous rings or spiraling structures that keep spinning
    over time. The rings tend to grow over time, disband, and reform.}
  \label{fig:rings}
\end{figure}
The resulting behavior is for a robot to turn away from kin, but turn the
opposite way when the robot sees nothing or non-kin. This behavior amounts to
robots zig-zagging in a line towards their kin. As discussed
in~\cite{StOnge:IROS2018}, when multiple robots execute this zig-zag behavior,
spinning rings are formed. The remarkable aspect, in our case, is that spinning
rings of \emph{kin} robots emerge, eventually segregating the swarm in
homogeneous groups. An example of this phenomenon as observed in simulated
experiments is reported in Fig.~\ref{fig:rings}. We also noted that occasionally the
robots form spinning spiral shapes, which can also be seen in
Fig.~\ref{fig:rings}.

\myparagraph{Interesting phenomena}
It is important to note that, although we originally hoped that segregated
clusters would be tightly packed, none of the top scoring parameters according
to our cost function form tightly packed clusters. There \emph{are} parameters
which achieve tightly packed segregation, but they do so extremely slowly, and
therefore our cost function correctly penalized them heavily for being too slow
to be useful. In addition, we observed that the spinning rings formed by the
robots tended to expand over time, disband into smaller structures, and
eventually reappear. In addition, if a ring is disturbed by non-kin robots
passing through it, the ring is disrupted but it eventually reforms. The growing
and self-repairing dynamics is compatible with the findings in the work of
St.-Onge \emph{et al.}~\cite{StOnge:IROS2018}, in which ring formation is
decomposed in four simpler behavioral traits: scouting, chaining, looping, and
merging. This segregation behavior follows the same phases, and it constitutes a
further example of structured space-time coordination arising from minimalistic
assumptions on the capabilities of the robot. To better appreciate the dynamics
of this behavior, we invite the interested reader to watch the videos at
\href{https://www.youtube.com/playlist?list=PL9HqYJ1IkIKVX9EsT5BY9LnBsBPTjc5bB}{https://goo.gl/z8UAuB}.

\section{Behavior Analysis}
\label{sec:analysis}
Using the parameter settings~\eqref{eq:controller} as a basis, we now analyze
the emerging behavior with the purpose of explaining how and why it emerges. In
particular, we discuss the conditions under which segregation is guaranteed.
\newcommand{\ICC}{\ensuremath{\text{ICC}}}

In the proofs, we employ the well-known equations that govern the instantaneous
radius of curvature $R$ and rotation speed $\omega$ of the path followed a
differential-drive robot with $l$ denoting the interwheel distance~\cite{Dudek2010}:
\begin{equation}
  \label{eq:Ricc}
  \begin{aligned}
    R &= \frac{l}{2}\frac{\vaR + \vaL}{\vaR - \vaL} = \frac{l}{2}\frac{\vR + \vL}{\vR - \vL}\\
    \omega &= \frac{\vaR - \vaL}{l} = \frac{\VM(\vR - \vL)}{l}.
  \end{aligned}
\end{equation}

\begin{theorem}[Scouting]
  When a robot $i$ does not detect a kin, it turns clockwise until it finds one.
\end{theorem}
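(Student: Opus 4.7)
The plan is to observe that ``does not detect a kin'' corresponds to exactly two sensor readings, $S=0$ (no robot in range) and $S=2$ (a non-kin is in range), and then to show that in both of these cases the controller~\eqref{eq:controller} produces a negative angular velocity $\omega$, which by the standard sign convention in~\eqref{eq:Ricc} corresponds to a clockwise rotation.

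First, I would read off the wheel-speed commands from~\eqref{eq:controller}: the pair $(\vPN{left}{0},\vPN{right}{0})=(1,-\sfrac{2}{3})$ is applied whenever $S=0$, and the pair $(\vPN{left}{2},\vPN{right}{2})=(1,0)$ is applied whenever $S=2$. Substituting each pair into the angular-velocity formula $\omega = \VM(\vR-\vL)/l$ from~\eqref{eq:Ricc} gives
\begin{equation*}
\omega_{S=0} = \frac{\VM}{l}\left(-\tfrac{2}{3}-1\right) = -\frac{5\VM}{3l} < 0,
\qquad
\omega_{S=2} = \frac{\VM}{l}(0-1) = -\frac{\VM}{l} < 0,
\end{equation*}
so in either non-kin situation the robot rotates in the same (negative, i.e.\ clockwise) direction at a bounded nonzero rate.

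Next I would argue persistence: since $\omega$ is strictly bounded away from zero in both cases, the heading $\theta(t)$ decreases strictly monotonically as long as the sensor reading remains in $\{0,2\}$. The controller is purely reactive and the sensor reading can only change when another robot enters or leaves the ternary sensor's field of view; the only reading that stops the clockwise rotation is $S=1$, i.e.\ the detection of a kin. Therefore the clockwise turning continues uninterrupted until a kin is detected, which is exactly the claim. (Strictly speaking, the controller produces a small forward translation as well because $\vL+\vR \neq 0$ in the $S=2$ case, so the robot's motion is not a pure in-place spin; this does not affect the statement, which only concerns the direction of rotation.)

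The main obstacle is conceptual rather than computational: one must verify that ``not detecting a kin'' is indeed covered by the two cases $S\in\{0,2\}$ of the ternary sensor, and that the sign convention used in~\eqref{eq:Ricc} indeed maps $\omega<0$ to clockwise motion in the coordinate frame of~\eqref{eq:diffdrive}. Once these conventions are pinned down, the proof is a direct substitution.
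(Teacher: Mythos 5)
Your proposal is correct and follows essentially the same route as the paper: both read the wheel speeds off \eqref{eq:controller} for the two non-kin readings $S=0$ and $S=2$, observe that the left wheel is faster than the right in each case (equivalently, $\omega<0$ via \eqref{eq:Ricc}), and conclude clockwise rotation that persists until a kin triggers $S=1$. Your version is merely more explicit in computing $\omega$ and in spelling out the persistence step, which the paper leaves implicit.
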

\begin{proof}
  The proof derives from the observation of the speeds in
  \eqref{eq:controller}. When $S=0$ (no robot detected) and $S=2$ (non-kin
  detected), the left wheel speed is larger than the right one, producing a
  circular clockwise motion. Conversely, when $S=1$ (a kin is detected), the
  right wheel speed is larger than the left one, and the robot turns
  counterclockwise.
\end{proof}

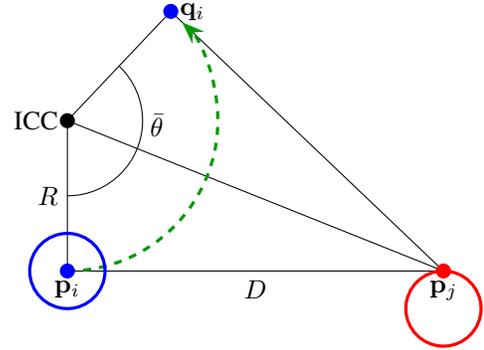
\begin{figure}[t]
  \centering
  \begin{tikzpicture}
    \coordinate(Pi)  at (0,0);
    \coordinate(Pj)  at ($(Pi)+(5cm,0cm)$);
    \coordinate(ICC) at ($(Pi)+(0cm,2cm)$);
    \coordinate(Qi)  at ($(Pj)!5cm!-21.812:(ICC)$);
    \draw (ICC) -- (Pi) node[left,midway]{$R$};
    \draw (Pi) -- (Pj) node[below,midway]{$D$};
    \draw ($(ICC)-(0,1cm)$) arc (-90:46.376:1cm) node[pos=0.75,below right]{$\bar{\theta}$};
    \draw (Qi) -- (ICC);
    \draw (Pj) -- (ICC);
    \draw (Pj) -- (Qi);
    \draw[dashed,very thick,green!60!black,-{Stealth[]},shorten >=2mm] ($(ICC)-(0,2cm)$) arc (-90:46.376:2cm);
    \fill (ICC) circle (1mm);
    \node[left] at (ICC) {$\ICC$};
    \draw[very thick,blue] (Pi) circle (5mm);
    \fill[blue] (Pi) circle (1mm);
    \node[below] at (Pi) {$\vec{p}_i$};
    \fill[blue] (Qi) circle (1mm);
    \node[right] at (Qi) {$\vec{q}_i$};
    \draw[very thick,red] ($(Pj)-(0,5mm)$) circle (5mm);
    \fill[red] (Pj) circle (1mm);
    \node[below] at (Pj) {$\vec{p}_j$};
  \end{tikzpicture}
  \caption{A diagram for the geometry of motion towards kin robots. The red
    circle ($\vec{p}_j$) depicts the detected kin neighbor; the blue circle
    indicates the robot whose motion is being modeled ($\vec{p}_i$); the dashed
    green arc is the path followed by the robot, up to the limit point
    $\vec{q}_i$. Beyond the latter point, the blue robot ceases to move closer
    to the red robot.}
  \label{fig:movetokin}
\end{figure}
\begin{lemma}[Motion towards kin when kin detected]
  A robot moves towards a kin robot if
  \begin{equation}
    \label{eq:movetokin}
    \VM\Delta t \le 3l\tan^{-1}\left[\frac{\sqrt{3}r}{l}\right].
    \tag{S1}
  \end{equation}
\end{lemma}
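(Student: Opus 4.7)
The plan is to exploit the fact that the $S=1$ branch of controller~\eqref{eq:controller} sends the sensing robot along a fixed circular arc, and to reduce the geometric statement ``robot $i$ is closer to its detected kin at time $\Delta t$ than at time $0$'' to an inequality in $\omega\Delta t$ that rearranges directly into~\eqref{eq:movetokin}.

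First, substituting the $S=1$ wheel speeds $\vPN{left}{1}=1/3$ and $\vPN{right}{1}=1$ from~\eqref{eq:controller} into~\eqref{eq:Ricc} gives turning radius $R=l$ and angular speed $\omega=2\VM/(3l)$. I would place robot $i$ at the origin with heading along $+\hat{x}$, so the ICC sits at $(0,l)$ and $\vec{p}_i(t)=(l\sin(\omega t),\,l(1-\cos(\omega t)))$. A straightforward expansion of $\|\vec{p}_i(t)-\vec{p}_j\|^2-\|\vec{p}_j\|^2$, combined with the half-angle identities $1-\cos\phi=2\sin^2(\phi/2)$ and $\sin\phi=2\sin(\phi/2)\cos(\phi/2)$, factors this difference as $4l\sin(\omega t/2)\bigl[(l-y_j)\sin(\omega t/2)-x_j\cos(\omega t/2)\bigr]$, where $\vec{p}_j=(x_j,y_j)$. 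Hence $i$ is strictly closer to $j$ at time $t$ than at time $0$ iff $\tan(\omega t/2)<x_j/(l-y_j)$, and the limit point $\vec{q}_i$ of Fig.~\ref{fig:movetokin} is attained at equality.

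Next, to make the bound hold for every kin position that could produce $S=1$, I would minimise $x_j/(l-y_j)$ over kin centres consistent with both the infinitely thin forward ray ($|y_j|\le r$, so that the ray actually cuts the kin's body) and non-overlap of bodies ($x_j^2+y_j^2\ge 4r^2$). A short case analysis on this constrained minimisation pins the worst case to a body-tangency configuration on the boundary $x_j^2+y_j^2=4r^2$, and reads off the effective bound $\sqrt{3}r/l$ inside the arctangent. Combining $\omega\Delta t\le 2\tan^{-1}(\sqrt{3}r/l)$ with $\omega=2\VM/(3l)$ then yields $\VM\Delta t\le 3l\tan^{-1}(\sqrt{3}r/l)$, i.e.~\eqref{eq:movetokin}.

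The hard part will be this worst-case step. Assuming naively that the kin sits head-on at contact distance $2r$ would give the cleaner but incorrect $2r/l$ inside the arctangent; the $\sqrt{3}$ factor reflects that the detected kin body can sit tangentially off-axis within the thin-ray region, so the relevant extremum involves simultaneously the ray-intersection and the body-overlap constraints rather than just the axial distance, and one has to verify that the tangent-from-the-side configuration is the actual minimiser of $x_j/(l-y_j)$ on the feasible set.
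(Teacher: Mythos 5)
Your overall route is the same as the paper's: compute $R=l$ and $\omega=2\VM/(3l)$ from the $S=1$ speeds, characterize the arc angle beyond which the distance to the kin starts to grow, and then take a worst case over the initial separation. Your factorization $4l\sin(\omega t/2)\bigl[(l-y_j)\sin(\omega t/2)-x_j\cos(\omega t/2)\bigr]$ is correct, and for $y_j=0$, $x_j=D$ it is exactly an algebraic derivation of the paper's geometric identity $\bar{\theta}=2\tan^{-1}(D/|R|)$, generalized to off-axis targets.

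The gap is in the worst-case step, precisely where you flagged it. On your feasible set $\{|y_j|\le r,\; x_j^2+y_j^2\ge 4r^2,\; x_j>0\}$ the minimum of $x_j/(l-y_j)$ is not $\sqrt{3}r/l$. Parametrizing the tangency circle as $x_j=2r\cos\alpha$, $y_j=2r\sin\alpha$ with $|\sin\alpha|\le\sfrac{1}{2}$, the derivative of $2r\cos\alpha/(l-2r\sin\alpha)$ vanishes only at $\sin\alpha=2r/l$, which is a maximum; the minimum therefore sits at the endpoint $(x_j,y_j)=(\sqrt{3}r,-r)$ and equals $\sqrt{3}r/(l+r)$. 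The point $(\sqrt{3}r,0)$ that would yield $\sqrt{3}r/l$ violates your own non-overlap constraint, since $3r^2<4r^2$. Carried out faithfully, your argument thus proves the strictly stricter condition $\VM\Delta t\le 3l\tan^{-1}\left[\sqrt{3}r/(l+r)\right]$, which does not establish the lemma as stated: condition \eqref{eq:movetokin} admits values of $\VM\Delta t$ that your bound excludes. The paper obtains $\sqrt{3}r/l$ by keeping the target point on the sensing ray itself: in Fig.~\ref{fig:movetokin} the point $\vec{p}_j$ is the detected boundary point of the kin (the ray grazing the kin's body tangentially), so the denominator stays $|R|=l$ and the minimal along-ray distance at body tangency is $\sqrt{(2r)^2-r^2}=\sqrt{3}r$. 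To recover the lemma's constant you must either adopt that convention (distance to the detected point rather than to the kin's center) or accept the $(l+r)$ denominator; your center-based model and the paper's ray-point model genuinely differ by that amount, and your write-up currently conflates the two.
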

\begin{proof}
  When a robot encounters a kin, it picks the speeds
  $\vPN{left}{1} = \sfrac{1}{3}$ and $\vPN{right}{1} = 1$. This corresponds to a
  path that arcs counterclockwise, as reported in the diagram of
  Fig.~\ref{fig:movetokin}. We indicate the position of the moving robot as
  $\vec{p}_i$ and the position of the kin as $\vec{p}_j$. For robot $i$ to move
  towards $j$, its trajectory path must be an arc that does not move past a
  limit point $\vec{q}_i$ because, beyond this point, the distance between $i$
  and $j$ would increase. We can express this condition as
  $$|\omega|\Delta t\le \bar{\theta}$$
  where $\bar{\theta}$ is the angle of the arc connecting $\vec{p}_i$ and $\vec{q}_i$.

  Reasoning on the triangle formed by $\vec{p}_i$, $\vec{p}_j$ and the $\ICC$,
  we can calculate
  \begin{equation}
    \label{eq:theta2}
    \bar{\theta} = 2\tan^{-1}\frac{D}{|R|}
  \end{equation}
  where we defined $D = \lVert\vec{p}_i - \vec{p}_j\rVert$. The right hand side of this
  inequality monotonically increases with $D$, hence to find the most strict
  condition we need to consider the smallest value of $D$ possible. This value
  corresponds to the situation in which $i$ and $j$ are tangent to each other;
  in this case, $D=\sqrt{3}r$, where $r$ indicates the robot radius (assumed
  identical for both robots). Using \eqref{eq:Ricc} with the values of
  $\vPN{left}{1}$ and $\vPN{right}{1}$, we obtain:
  \begin{align*}
    R &= \frac{l}{2}\frac{1+\sfrac{1}{3}}{1-\sfrac{1}{3}} = l\\
    \omega &= \frac{\VM(1-\sfrac{1}{3})}{l} = \frac{2}{3}\frac{\VM}{l}.
  \end{align*}
  Plugging these expressions in \eqref{eq:theta2}, we obtain the statement.
\end{proof}

\begin{lemma}[Motion towards kin when nothing detected]
  Assume that a robot $i$ saw a kin $j$ at time $t-1$, has performed one step
  with $\vPN{left}{1}$ and $\vPN{right}{1}$, and at time $t$ it detects no robot.
  Robot $i$ moves towards the kin robot if
  \begin{equation}
    \label{eq:movetonone}
    \VM\Delta t \le \frac{6}{5}l\tan^{-1}\left[\frac{10\sqrt{3}r}{l}\right].
    \tag{S0}
  \end{equation}
\end{lemma}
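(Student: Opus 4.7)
The plan is to follow the geometric template used in the preceding lemma, but with the wheel speeds and arc radius associated with $S=0$, together with a slightly more delicate argument for the minimum admissible kin distance at time $t$.

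First I would substitute $\vPN{left}{0}=1$ and $\vPN{right}{0}=-\sfrac{2}{3}$ into~\eqref{eq:Ricc} to obtain $|R|=l/10$ and $|\omega|=\sfrac{5}{3}\,\VM/l$. The sign of $\omega$ is negative, so this step traces a tight clockwise arc whose radius is an order of magnitude smaller than the counterclockwise arc used in the previous lemma, consistent with the Scouting theorem. I would then reproduce the geometric setup of Fig.~\ref{fig:movetokin}, reflected across the heading axis so that the $\ICC$ lies on the opposite side of the robot. The condition that $i$ does not arc past the limit point $\vec{q}_i$ beyond which its distance to $\vec{p}_j$ would start growing reduces, exactly as before, to $|\omega|\Delta t \le \bar{\theta}=2\tan^{-1}(D/|R|)$, where $D=\lVert\vec{p}_i-\vec{p}_j\rVert$.

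Because the right-hand side is monotonically increasing in $D$, the tightest constraint comes from the smallest admissible value of $D$ at time $t$. This is where the hypothesis of a preceding $S=1$ step enters: by the previous lemma, that step drives $i$ no further from $j$, so the tangent lower bound $D\ge\sqrt{3}r$ used there carries over to time $t$. Substituting $D=\sqrt{3}r$, $|R|=l/10$, and the expression for $|\omega|$ into the inequality and solving for $\VM\Delta t$ produces exactly~\eqref{eq:movetonone}: the factor $10$ inside the arctangent arises directly from $1/|R|=10/l$, and the prefactor $\sfrac{6}{5}\,l$ from inverting $|\omega|/2$.

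The step I expect to be the real obstacle is justifying the geometric configuration at time $t$: one has to argue that after the $S=1$ step, $j$ sits on the clockwise side of $i$'s updated heading (so that the clockwise arc actually approaches $j$ rather than flees it) and that no single $S=1$ step can drive the pair below the tangent bound inherited from the previous lemma, since otherwise the worst-case $D$ would be strictly smaller than $\sqrt{3}r$ and the stated inequality would have to be strengthened. Once these geometric facts are settled, the remainder is an algebraic substitution that mirrors the preceding proof line-for-line.
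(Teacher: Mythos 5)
Your proposal is correct and matches the paper's approach: the paper's own proof is a one-line deferral to Lemma~1 with the wheel speeds $\vPN{left}{0}=1$ and $\vPN{right}{0}=-\sfrac{2}{3}$, and your computation of $|R|=l/10$ and $|\omega|=\sfrac{5}{3}\,\VM/l$ followed by substitution of $D=\sqrt{3}r$ into $|\omega|\Delta t\le 2\tan^{-1}(D/|R|)$ is exactly the intended calculation, yielding \eqref{eq:movetonone}. The geometric caveats you flag at the end (that the kin must lie on the clockwise side of the updated heading, and that the preceding $S=1$ step cannot push $D$ below $\sqrt{3}r$) are real gaps, but the paper leaves them implicit as well, so your reconstruction is faithful to what the authors actually argue.
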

\begin{proof}
  This proof follows the same reasoning as in Lemma 1 with the wheel speeds
  $\vPN{left}{0} = 1$ and $\vPN{right}{0} = -\sfrac{2}{3}$.
\end{proof}

\begin{lemma}[Motion towards kin when non-kin detected]
  Assume that a robot $i$ saw a kin $j$ at time $t-1$, has performed one step
  with $\vPN{left}{1}$ and $\vPN{right}{1}$, and at time $t$ it detects a non-kin.
  Robot $i$ moves towards the kin robot if
  \begin{equation}
    \label{eq:movetononkin}
    \VM\Delta t \le 2l\tan^{-1}\left[\frac{2\sqrt{3}r}{l}\right].
    \tag{S2}
  \end{equation}
\end{lemma}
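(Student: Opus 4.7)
The plan is to carry over the geometric derivation of Lemma 1 with the $S=2$ wheel speeds substituted for the $S=1$ speeds. At time $t$ the robot senses a non-kin, so Alg.~\ref{alg:controller} selects $\vPN{left}{2}=1$ and $\vPN{right}{2}=0$; inserting these in \eqref{eq:Ricc} yields $|R|=l/2$ and $|\omega|=\VM/l$, with the sign of $\omega$ indicating that the traced arc is clockwise rather than counterclockwise.

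Next I would invoke the same limit-point construction as in Fig.~\ref{fig:movetokin}, mirrored across the line through $\vec{p}_i$ and $\vec{p}_j$ to account for the reversed sense of rotation. The isosceles triangle with vertices $\ICC$, $\vec{p}_i$, and $\vec{q}_i$ still has central angle $\bar{\theta}=2\tan^{-1}(D/|R|)$, so $i$ keeps approaching $j$ precisely when $|\omega|\Delta t\le\bar{\theta}$. Substituting the computed $|R|$ and $|\omega|$ and taking the worst-case distance $D=\sqrt{3}r$ (the tangent configuration identified in Lemma 1) rearranges to exactly \eqref{eq:movetononkin}.

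The delicate point---and the piece I would want to argue carefully rather than inherit implicitly from Lemma 1---is that $D=\sqrt{3}r$ remains the worst case under the present hypotheses. Because robot $i$ has already executed one $S=1$ step since the last kin detection, its pose at time $t$ is not the one that produced the tangent configuration in Lemma 1, and its closest sensed neighbor is now a non-kin rather than $j$. I would therefore verify that (i) body non-interpenetration keeps the distance to any kin at least as large as the tangent bound, and (ii) the heading inherited from the $S=1$ step is still compatible with the $S=2$ clockwise arc initially decreasing, rather than increasing, the distance to $j$---otherwise the condition $|\omega|\Delta t\le\bar{\theta}$ would have to be replaced by a weaker net-displacement statement. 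Once these two geometric sanity checks are in place, the proof reduces to the same substitution exercise used for Lemma 2.
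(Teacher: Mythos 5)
Your proposal is correct and follows essentially the same route as the paper, whose proof is literally the one-line remark that the argument of Lemma 1 is repeated with $\vPN{left}{2}=1$ and $\vPN{right}{2}=0$; your substitution $|R|=l/2$, $|\omega|=\VM/l$ into $|\omega|\Delta t\le 2\tan^{-1}(D/|R|)$ with $D=\sqrt{3}r$ reproduces \eqref{eq:movetononkin} exactly. The two geometric sanity checks you flag (whether $D=\sqrt{3}r$ is still the worst case after the intervening $S=1$ step, and whether the clockwise arc initially shrinks the distance) are genuine subtleties, but the paper leaves them implicit as well, so they do not mark a divergence in approach.
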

\begin{proof}
  This proof follows the same reasoning as in Lemma 1 with the wheel speeds
  $\vPN{left}{2} = 1$ and $\vPN{right}{2} = 0$.
\end{proof}

\begin{figure}[t]
  \centering
  \includegraphics[width=0.8\linewidth]{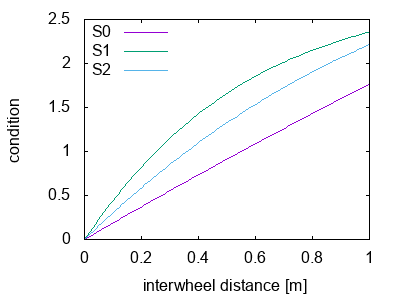}
  \caption{A graphical proof that \eqref{eq:movetonone} is the strictest
    condition. The graph relates the length of the interwheel distance to the
    value of the conditions, for a constant value of the robot body radius.}.
  \label{fig:conditions}
\end{figure}
\begin{theorem}[Chaining]
  A robot $i$ eventually follows a kin $j$ if \eqref{eq:movetonone} holds.
\end{theorem}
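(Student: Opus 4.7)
The plan is to stitch together the three motion lemmas with the Scouting theorem. The first ingredient is Fig.~\ref{fig:conditions}, which I would invoke to observe that \eqref{eq:movetonone} is the strictest of the three conditions: assuming (S0) therefore automatically gives (S1) and (S2). In particular, the geometric conclusion of each of the three lemmas is available whenever its prerequisite on the previous-step wheel speeds is met.

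Next, I would use the Scouting theorem to argue that $i$ eventually registers $S=1$ for some kin $j$: while $S \in \{0,2\}$, the robot sweeps clockwise arcs that, in an obstacle-free environment populated by at least one kin, must eventually bring such a kin onto the sensor ray. From the first sighting onward, I would proceed by induction on the time step, claiming that $\lVert \vec{p}_i(t) - \vec{p}_j(t) \rVert$ is non-increasing. The three inductive cases $S=1$, $S=1 \to S=0$, and $S=1 \to S=2$ are handled by Lemmas~1, 2, and 3 respectively, each concluding that the step shrinks the distance to $j$.

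The main obstacle I expect is the prior-step hypothesis in Lemmas~2 and~3: they explicitly require the preceding step to have used $\vPN{left}{1},\vPN{right}{1}$, which fails for trajectories such as $S=1,S=0,S=0,\ldots$ in which the sensor stays at $0$ for several consecutive steps. I would close this gap by re-running the arc argument of Fig.~\ref{fig:movetokin} from an arbitrary prior configuration: the formula $\bar{\theta} = 2\tan^{-1}(D/|R|)$ is monotone in $D$, and because robot bodies cannot overlap we always have $D \geq 2r > \sqrt{3}r$, so the tangent separation used in the lemmas remains the worst case. With the lemmas so extended to arbitrary priors, the induction closes and the distance to $j$ is driven monotonically down to a value at which $i$ remains within sensing range of $j$, which is what it means for $i$ to follow $j$.
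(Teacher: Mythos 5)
Your proposal follows the paper's proof essentially step for step: Theorem~1 (Scouting) brings a kin into view, Lemmas~1--3 cover the three sensor states on the steps that follow, and Fig.~\ref{fig:conditions} is invoked to reduce all three conditions to the single strictest one, \eqref{eq:movetonone}. The gap you flag---runs of consecutive $S=0$ or $S=2$ readings, for which the ``one prior step at $\vPN{left}{1},\vPN{right}{1}$'' hypothesis of Lemmas~2 and~3 fails---is real, and the paper's own proof simply glosses over it (``eventually robot $i$ will see the kin again, and the cycle continues''), so spotting it is to your credit. Your patch, however, does not actually close it: the bound $\bar{\theta} = 2\tan^{-1}(D/|R|)$ is derived from the right angle at $\vec{p}_i$ in Fig.~\ref{fig:movetokin}, i.e.\ from the assumption that the robot's heading points at $\vec{p}_j$, which is guaranteed only at the instant the kin is detected. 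From an arbitrary prior pose the residual arc before the limit point can be much smaller than $2\tan^{-1}(D/|R|)$ (the robot may already have swung past its closest approach on the current arc), so a lower bound on $D$ alone is not enough to re-run the argument. At the level of rigor the paper itself adopts, your proof is equivalent to the published one; the extra lemma you would need is a genuine strengthening that neither you nor the paper supplies.
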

\begin{proof}
  Because of Theorem 1, a robot that has not detected a kin rotates clockwise
  until it finds one. After this, the robot turns counterclockwise and, if
  \eqref{eq:movetokin} holds, it steps closer to the kin. When the robot cannot
  detect the kin anymore it turns clockwise. If both
  \eqref{eq:movetonone} and \eqref{eq:movetononkin} hold, then it is guaranteed
  robot $i$ moves closer to the kin. Eventually robot $i$ will see the kin
  again, and the cycle continues. This reasoning can be repeated for any pair of
  robots, and a chain self-sustains if the three conditions are satisfied at the
  same time. This occurs when the strictest among them holds. As shown
  graphically in Fig.~\ref{fig:conditions}, the strictest condition is
  \eqref{eq:movetonone}.
\end{proof}

\begin{theorem}[Looping]
  A chain of kin robots eventually forms a loop.
\end{theorem}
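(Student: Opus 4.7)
The plan is to combine Theorems 1 and 2 with a sweep-continuity argument. A chain is a sequence of kin robots $i_1, i_2, \ldots, i_n$ in which each $i_{k+1}$ follows $i_k$ by Theorem 2 (Chaining), while the head $i_1$, having no kin ahead of it, executes the clockwise scouting motion of Theorem 1. Because each follower remains within a bounded neighborhood of its predecessor whenever \eqref{eq:movetonone} holds, induction along the chain confines the entire chain to a bounded region $\mathcal{R}$ of the plane.

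I would then show that the head must eventually detect a kin already in the chain. Since $\theta_{i_1}(t)$ decreases monotonically and covers a full $2\pi$ in finite time, regardless of whether the head reads $S=0$ or $S=2$, the direction of the head's infinitely-thin sensor beam sweeps through every angle. For any other kin $j$ in the chain, let $\alpha_j(t)$ denote the bearing from $i_1$ to $j$; this quantity varies only within a bounded interval because both robots remain in $\mathcal{R}$. By applying the intermediate value theorem to $\theta_{i_1}(t) - \alpha_j(t)$, there must exist a time at which the beam lies along the ray from $i_1$ to $j$, at which instant the sensor returns $S=1$.

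Once the head detects the kin $j$, Lemma 1 implies that $i_1$ begins to move toward $j$, and repeated application of Theorem 2 makes $i_1$ a persistent follower of $j$. But $j = i_k$ for some $k \ge 2$, so the ``follows'' relation, previously a linear order, now contains a directed cycle $i_1 \to i_k \to i_{k+1} \to \cdots \to i_n \to i_1$. This cycle is precisely a loop of kin robots, which establishes the theorem.

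The main obstacle I anticipate is the subtle coupling between the rotating head and the moving candidate targets. The bearing $\alpha_j(t)$ is not stationary because $i_1$ circulates at the small radius $l/10$ and the candidate $j$ is itself zig-zagging toward its own predecessor; one must verify that the sweep of $\theta_{i_1}$ intersects $\alpha_j$ in the forward direction of the beam rather than the antipodal ray, and that no non-kin is closer along the beam at the critical instant (which would yield $S=2$ instead of $S=1$). A case analysis on the distance ordering along the chain, together with the monotone accumulation of head rotation, should rule these exceptional cases out and guarantee that loop formation occurs in finite time.
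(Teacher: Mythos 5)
Your argument follows essentially the same route as the paper's: the head scouts clockwise (Theorem~1) until its sensor sweeps onto a kin already in the chain, after which chaining (Theorem~2) turns the ``follows'' relation into a cycle, i.e., a loop. The paper's own proof is in fact \emph{less} detailed than yours---it only adds the case where the first kin detected lies outside the chain (the chain then grows and the argument repeats)---so your sweep/intermediate-value elaboration and your explicit list of unresolved difficulties (moving targets, occlusion by non-kin) go beyond what the authors themselves establish.
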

\begin{proof}
  When a chain is formed, the robot in front either detects a kin, in which case
  it follows it (thus growing the chain); or it does not detect a kin, in which
  case it moves clockwise until it detects one of the kin robots that follow it
  in the chain. Since every robot behind the front robot performs the chaining
  behavior, this results in the chain looping on itself until the tail is
  reached and the loop is closed.
\end{proof}

\section{Experimental Results}

\subsection{Scalability Study} \label{section:scalability}

In this experiment we investigate how the segregation behavior scales with the
number of classes and the number of robots in the environment. We varied the
number of classes from 1 to 25 and ran 100 trials with robots uniformly randomly
distributed.

\myparagraph{Fixed number of robots per class}
We studied the case in which every class has 10 robots. The results of this are
plotted in Fig.~\ref{fig:num_classes_10}. We observe that the cost increases
with the number of classes. This occurs because, as the number of classes
increases, the density of the robots increases too. Hence, line-of-sight
occlusions between robots are more likely, navigation is more difficult, and the
clusters do not have a chance to coalesce.

\myparagraph{Fixed total number of robots}
We considered the scenario in which a fixed number of robots is split into an
increasing number of classes. We set the number of robots to 100, so with 25
classes 4 robots were still assigned to each class. As reported in
Fig.~\ref{fig:num_classes_100}, the cost is high for small numbers of classes
but its value decreases fast and eventually oscillates lightly. The initial high
cost is due to the fact that, with 100 robots to divided among few classes, it
is difficult for every robot to join the same cluster. The clusters, instead,
tend to form large islands of kins. As the number of classes increases, we
conjecture that the oscillations are an artifact of the random initial
conditions of each experiment.

\begin{figure}[t]
  \centering
  \includegraphics[width=1\linewidth]{./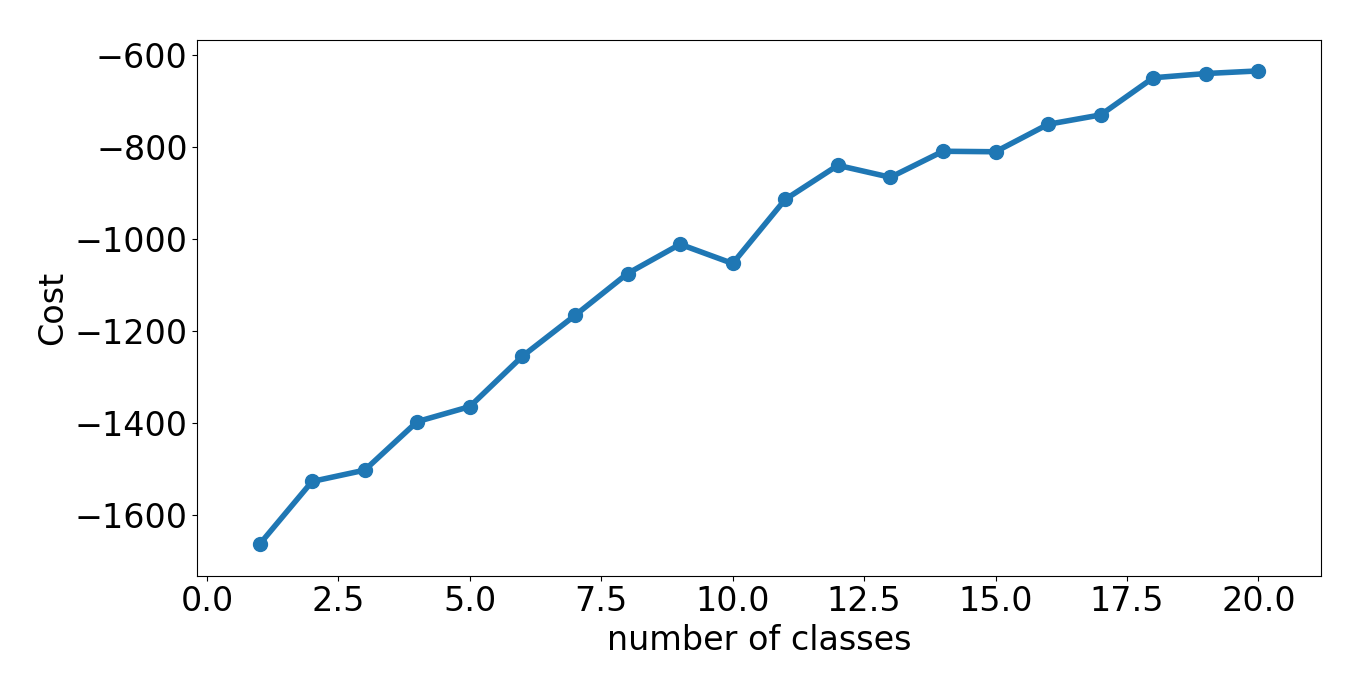}
  \caption{The average cost over 100 trials with $N$ classes, 10 robots per class.}
  \label{fig:num_classes_10}
\end{figure}

\begin{figure}[t]
  \centering
  \includegraphics[width=1\linewidth]{./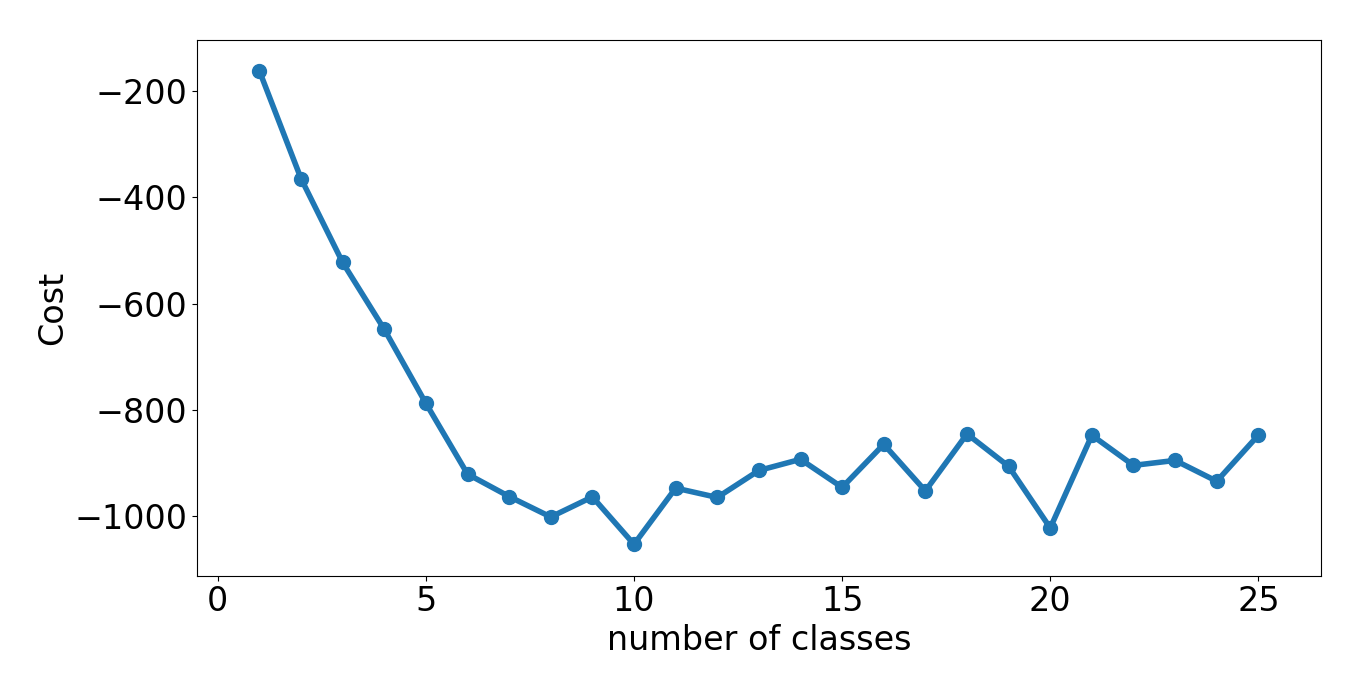}
  \caption{The average cost over 100 trials with 100 robots divided into $N$ classes.}
  \label{fig:num_classes_100}
\end{figure}

\subsection{The Effect of Implementation Details of the Sensor} \label{section:sensor_impl}

We observed that the implementation details of the sensor have a significant
effect on the behavior of the controller.

Initially, our method for determining sensor state from the simulated
range-and-bearing system was to consider all the robots within some small angle
in front of the robot and pick the closest one. This is very similar to what
would be provided by a real-world camera that uses colored skirts on each robot
and picks the largest blob as the robot to be detected. This sensor
implementation works well and was used in the grid search experiments. However,
we found later that, if the robots instead always prefer to react to kin over
non-kin, larger rings form more quickly and robustly. For example, if there are
two robots within the field of view of a robot's sensor and the non-kin robot is
closer, the robot would ignore it and execute the $S=1$ logic, which drives the
robot towards the farther kin robot. Exploring exactly which of the various
implementation details have what effect on cost is left for future work.

\subsection{The Effect of the Beam Angle} \label{sec:aperture_angle}

On a real robot, there must be some finite beam angle to the theoretically
line-of-sight sensor. We ran 100 trials in simulation with uniformly random
initial distributions of 40 robots with various beam
angles. Fig.~\ref{fig:beam_angle} shows the results, along with a diagram
showing how we define beam angle. The best beam angle we tested was \ang{15},
and angles smaller or larger became progressively worse. We found that at lower
beam angles, it was possible for a robot to become stuck in groups of two or
three, and the robots spent all their time looking at each other instead of
peeking around them to find kin. At larger angles, we suspect the behavior fails
because larger beam angles cause the rings to enlarge faster, which in turn
causes the rings to be so large that they are not considered a cluster anymore
by our cost function.

\begin{figure}[t]
  \centering
  \begin{tikzpicture}[scale=0.75]
    \draw (0,0) circle (0.75);
    \draw (0,0) -- (5,0.7);
    \draw[dashed] (0,0) -- (5,0);
    \draw (0,0) -- (5,-0.7);
    \draw (4,0) arc [radius=4, start angle=0, end angle=8];
    \node at (4.25,0.25) {$\beta$};
  \end{tikzpicture}
  \includegraphics[width=1\linewidth]{./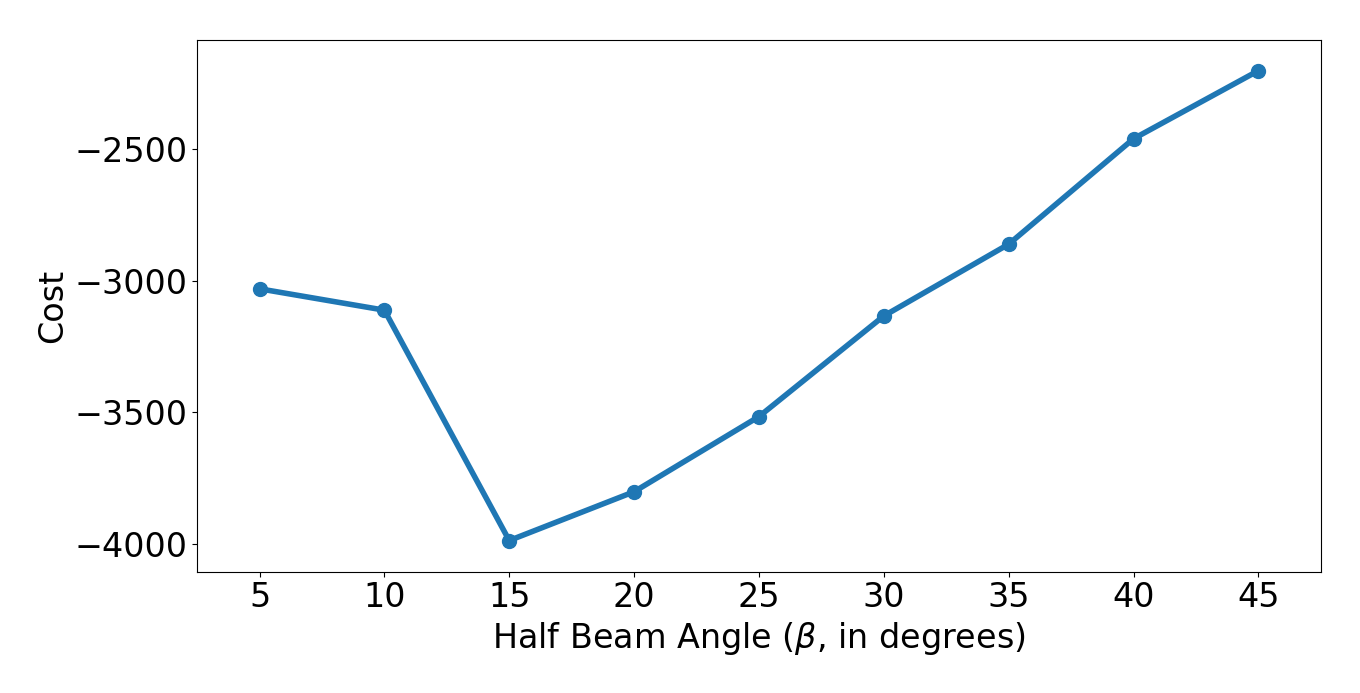}
  \caption{A \ang{15} degree half beam angle is best for segregation. Lower cost is better.}
  \label{fig:beam_angle}
\end{figure}

\subsection{The Effect of Beam Length} \label{section:beam_range}

We consider what happens if the theoretically infinite-range sensor has finite
range. We use \ang{15} half beam angle and the same experimental setup as with
the beam angle experiments. We consider the maximum range of the sensor as the
diagonal length of the square in which the robot are initially distributed. In
all our experiments, this square was \SI{5}{\meter} on each side, so we consider
a range of \SI{7.07}{\meter} to be effectively unlimited. We report the costs
for beam ranges as a fraction of this maximum range. As shown in
Fig.~\ref{fig:beam_range}, a beam range of 35\% of the theoretical maximum
performs just as well as an infinite sensor. Below this, the performance
degrades. However, even a beam range of 7\% of the maximum is more effective
than zero range at segregation.

\begin{figure}[t]
  \centering
  \includegraphics[width=1\linewidth]{./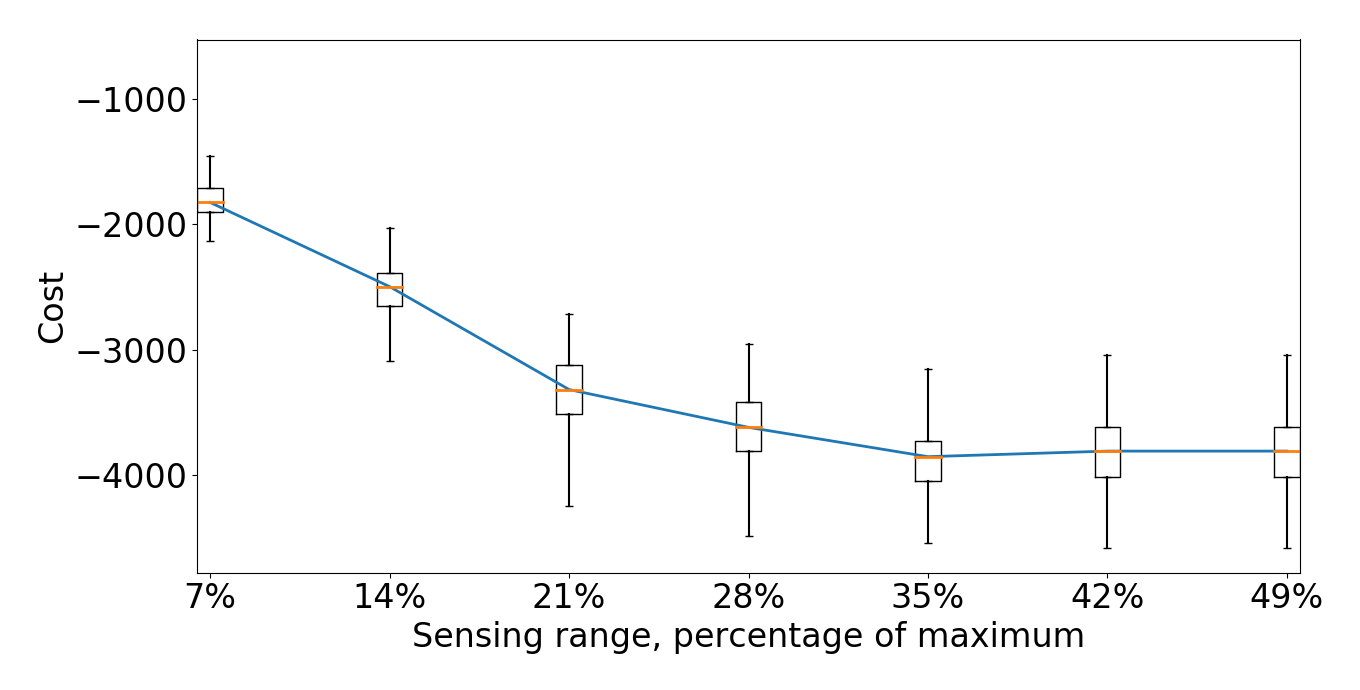}
  \caption{Segregation is robust to small sensor beam ranges. The performance at 35\% of maximum range is indistinguishable from infinite range.}
  \label{fig:beam_range}
\end{figure}

\section{Conclusion}

In this paper, we show how robots with only a ternary sensor and a controller
which maps sensor readings to wheel speeds is capable of $N$-class segregation.
This controller is invariant to the number of classes, and using the best found
parameters we are able to construct formal guarantees on the emergent behavior.
We performed a grid search to learn about the full parameter space, and we
investigated the effect of sensor implementation details and the number of
robots and classes on performance. Our findings indicate that robust segregation
with non-ideal sensors in reality is possible, although not guaranteed.

\bibliographystyle{IEEEtran}
\bibliography{RBE595.bib}

\begin{thebibliography}{10}
\providecommand{\url}[1]{#1}
\csname url@samestyle\endcsname
\providecommand{\newblock}{\relax}
\providecommand{\bibinfo}[2]{#2}
\providecommand{\BIBentrySTDinterwordspacing}{\spaceskip=0pt\relax}
\providecommand{\BIBentryALTinterwordstretchfactor}{4}
\providecommand{\BIBentryALTinterwordspacing}{\spaceskip=\fontdimen2\font plus
\BIBentryALTinterwordstretchfactor\fontdimen3\font minus
  \fontdimen4\font\relax}
\providecommand{\BIBforeignlanguage}[2]{{%
\expandafter\ifx\csname l@#1\endcsname\relax
\typeout{** WARNING: IEEEtran.bst: No hyphenation pattern has been}%
\typeout{** loaded for the language `#1'. Using the pattern for}%
\typeout{** the default language instead.}%
\else
\language=\csname l@#1\endcsname
\fi
#2}}
\providecommand{\BIBdecl}{\relax}
\BIBdecl

\bibitem{Brambilla2013}
M.~Brambilla, E.~Ferrante, M.~Birattari, and M.~Dorigo, ``{Swarm robotics: A
  review from the swarm engineering perspective},'' \emph{Swarm Intelligence},
  vol.~7, no.~1, pp. 1--41, 2013.

\bibitem{gauci_clustering_2014}
\BIBentryALTinterwordspacing
M.~Gauci, J.~Chen, W.~Li, T.~J. Dodd, and R.~Gro\ss\, ``Clustering {Objects}
  with {Robots} {That} {Do} {Not} {Compute},'' in \emph{Proceedings of the 2014
  {International} {Conference} on {Autonomous} {Agents} and {Multi}-agent
  {Systems}}, ser. {AAMAS} '14.\hskip 1em plus 0.5em minus 0.4em\relax
  Richland, SC: International Foundation for Autonomous Agents and Multiagent
  Systems, 2014, pp. 421--428. [Online]. Available:
  \url{http://dl.acm.org/citation.cfm?id=2615731.2615800}
\BIBentrySTDinterwordspacing

\bibitem{Bolger2010}
A.~Bolger, M.~Faulkner, D.~Stein, L.~White, and D.~Rus, ``Experiments in
  decentralized robot construction with tool delivery and assembly robots,'' in
  \emph{2010 {IEEE/RSJ} Int. Conf. on Intelligent Robots and Systems ({IROS}
  2010)}.\hskip 1em plus 0.5em minus 0.4em\relax IEEE Press, 2010, pp.
  5085--5092.

\bibitem{shlyakhov_survey_2017}
\BIBentryALTinterwordspacing
N.~E. Shlyakhov, I.~V. Vatamaniuk, and A.~L. Ronzhin,
  ``\BIBforeignlanguage{en}{Survey of {Methods} and {Algorithms} of {Robot}
  {Swarm} {Aggregation}},'' \emph{\BIBforeignlanguage{en}{Journal of Physics:
  Conference Series}}, vol. 803, p. 012146, Jan. 2017. [Online]. Available:
  \url{http://stacks.iop.org/1742-6596/803/i=1/a=012146?key=crossref.a3ceb0f2b8113f73c39aa6e56ddeeb8a}
\BIBentrySTDinterwordspacing

\bibitem{Pinciroli:DARS2016}
C.~Pinciroli, A.~Gasparri, E.~Garone, and G.~Beltrame, ``Decentralized
  progressive shape formation with robot swarms,'' in \emph{13th International
  Symposium on Distributed Autonomous Robotic Systems ({DARS2016})}, ser.
  Springer Proceedings in Advanced Robotics.\hskip 1em plus 0.5em minus
  0.4em\relax Springer, November 2016, pp. 433--445.

\bibitem{gross2008self}
R.~Gro{\ss} and M.~Dorigo, ``Self-assembly at the macroscopic scale,''
  \emph{Proceedings of the IEEE}, vol.~96, no.~9, pp. 1490--1508, 2008.

\bibitem{johnson_evolving_2016}
\BIBentryALTinterwordspacing
M.~Johnson and D.~Brown, ``Evolving and {Controlling} {Perimeter},
  {Rendezvous}, and {Foraging} {Behaviors} in a {Computation}-{Free} {Robot}
  {Swarm},'' in \emph{Proceedings of the 9th {EAI} {International} {Conference}
  on {Bio}-inspired {Information} and {Communications} {Technologies}
  ({Formerly} {BIONETICS})}, ser. {BICT}'15.\hskip 1em plus 0.5em minus
  0.4em\relax ICST, Brussels, Belgium, Belgium: ICST (Institute for Computer
  Sciences, Social-Informatics and Telecommunications Engineering), 2016, pp.
  311--314. [Online]. Available:
  \url{http://dx.doi.org/10.4108/eai.3-12-2015.2262390}
\BIBentrySTDinterwordspacing

\bibitem{brown_discovery_2018}
\BIBentryALTinterwordspacing
D.~S. Brown, R.~Turner, O.~Hennigh, and S.~Loscalzo,
  ``\BIBforeignlanguage{en}{Discovery and {Exploration} of {Novel} {Swarm}
  {Behaviors} {Given} {Limited} {Robot} {Capabilities}},'' in
  \emph{\BIBforeignlanguage{en}{Distributed {Autonomous} {Robotic} {Systems}}},
  ser. Springer {Proceedings} in {Advanced} {Robotics}.\hskip 1em plus 0.5em
  minus 0.4em\relax Springer, Cham, 2018, pp. 447--460. [Online]. Available:
  \url{https://link.springer.com.ezproxy.wpi.edu/chapter/10.1007/978-3-319-73008-0_31}
\BIBentrySTDinterwordspacing

\bibitem{gauci_evolving_2014}
\BIBentryALTinterwordspacing
M.~Gauci, J.~Chen, T.~J. Dodd, and R.~Gro\ss\,
  ``\BIBforeignlanguage{en}{Evolving {Aggregation} {Behaviors} in
  {Multi}-{Robot} {Systems} with {Binary} {Sensors}},'' in
  \emph{\BIBforeignlanguage{en}{Distributed {Autonomous} {Robotic} {Systems}}},
  ser. Springer {Tracts} in {Advanced} {Robotics}.\hskip 1em plus 0.5em minus
  0.4em\relax Springer, Berlin, Heidelberg, 2014, pp. 355--367. [Online].
  Available:
  \url{https://link.springer.com/chapter/10.1007/978-3-642-55146-8_25}
\BIBentrySTDinterwordspacing

\bibitem{StOnge:IROS2018}
D.~St-Onge, C.~Pinciroli, and G.~Beltrame, ``Circle formation with
  computation-free robots shows emergent behavioral structure,'' in
  \emph{Proceedings of the IEEE/RSJ International Conference on Intelligent
  Robots and Systems ({IROS} 2018)}.\hskip 1em plus 0.5em minus 0.4em\relax
  Madrid, Spain: IEEE Press, October 2018, in press.

\bibitem{batlle_molecular_2012}
\BIBentryALTinterwordspacing
E.~Batlle and D.~G. Wilkinson, ``\BIBforeignlanguage{en}{Molecular {Mechanisms}
  of {Cell} {Segregation} and {Boundary} {Formation} in {Development} and
  {Tumorigenesis}},'' \emph{\BIBforeignlanguage{en}{Cold Spring Harbor
  Perspectives in Biology}}, vol.~4, no.~1, p. a008227, Jan. 2012. [Online].
  Available: \url{http://cshperspectives.cshlp.org/content/4/1/a008227}
\BIBentrySTDinterwordspacing

\bibitem{Steinberg1963}
M.~S. Steinberg, ``Reconstruction of tissues by dissociated cells,''
  \emph{Science}, vol. 141, pp. 401--408, 1963.

\bibitem{Franks1992}
N.~N.~Franks and A.~Sendova-Franks, ``Brood sorting by ants: distributing the
  workload over the work-surface,'' \emph{Behavioral Ecology and Sociobiology},
  vol.~30, no.~2, pp. 109--123, 1992.

\bibitem{Spears2004}
W.~M. Spears, D.~F. Spears, J.~C. Hamann, and R.~Heil, ``{Distributed,
  Physics-Based Control of Swarms of Vehicles},'' \emph{Autonomous Robots},
  vol.~17, no. 2/3, pp. 137--162, Sep. 2004.

\bibitem{gross_segregation_2009}
R.~Gro\ss\, S.~Magnenat, and F.~Mondada, ``Segregation in swarms of mobile
  robots based on the {Brazil} nut effect,'' in \emph{2009 {IEEE}/{RSJ}
  {International} {Conference} on {Intelligent} {Robots} and {Systems}}, Oct.
  2009, pp. 4349--4356.

\bibitem{Chen2012}
J.~Chen, M.~Gauci, M.~J. Price, and G.~R, ``Segregation in swarms of e-puck
  robots based on the brazil nut effect,'' in \emph{Proc. of the 11th Int.
  Conf. on Auton. Agents and Multi-agent Syst.}, 2012, pp. 163--170.

\bibitem{kumar_segregation_2010}
M.~Kumar, D.~P. Garg, and V.~Kumar, ``Segregation of {Heterogeneous} {Units} in
  a {Swarm} of {Robotic} {Agents},'' \emph{IEEE Transactions on Automatic
  Control}, vol.~55, no.~3, pp. 743--748, Mar. 2010.

\bibitem{santos_segregation_2014}
V.~G. Santos, L.~C.~A. Pimenta, and L.~Chaimowicz, ``Segregation of multiple
  heterogeneous units in a robotic swarm,'' in \emph{2014 {IEEE}
  {International} {Conference} on {Robotics} and {Automation} ({ICRA})}, May
  2014, pp. 1112--1117.

\bibitem{Dudek2010}
G.~Dudek and M.~Jenkin, \emph{Computational Principles of Mobile Robotics},
  2nd~ed.\hskip 1em plus 0.5em minus 0.4em\relax Cambridge University Press,
  2010.

\bibitem{pinciroli_argos:_2012}
C.~Pinciroli, V.~Trianni, R.~O'Grady, G.~Pini, A.~Brutschy, M.~Brambilla,
  N.~Mathews, E.~Ferrante, G.~D. Caro, F.~Ducatelle, M.~Birattari, L.~M.
  Gambardella, and M.~Dorigo, ``{ARGoS}: a {Modular}, {Parallel},
  {Multi}-{Engine} {Simulator} for {Multi}-{Robot} {Systems},'' \emph{Swarm
  Intelligence}, vol.~6, no.~4, pp. 271--295, 2012.

\bibitem{Bonani2010}
M.~Bonani, V.~Longchamp, S.~Magnenat, P.~R\'etornaz, D.~Burnier, G.~Roulet,
  F.~Vaussard, H.~Bleuler, and F.~Mondada, ``The {marXbot}, a miniature mobile
  robot opening new perspectives for the collective-robotic research,'' in
  \emph{Proceedings of the {IEEE/RSJ} International Conference on Intelligent
  Robots and Systems ({IROS})}.\hskip 1em plus 0.5em minus 0.4em\relax
  Piscataway, NJ: {IEEE} Press, 2010, pp. 4187--4193.

\bibitem{Rubenstein2012}
M.~Rubenstein, C.~Ahler, and R.~Nagpal, ``{Kilobot: A low cost scalable robot
  system for collective behaviors},'' \emph{2012 IEEE International Conference
  on Robotics and Automation}, pp. 3293--3298, May 2012.

\end{thebibliography}

\end{document}